\documentclass[twoside]{article}

\usepackage[accepted]{aistats2023}
%
%


\usepackage[round]{natbib}

\bibliographystyle{apalike}

\usepackage{amsmath, amsfonts, physics}
\usepackage{amsthm, bm}
\usepackage{graphicx}
\usepackage{caption}
\usepackage{subcaption}
\usepackage{multirow}

\def\x{x}
\def\z{\bm{z}}
\def\pa{\text{pa}}
\def\e{\varepsilon}
\def\E{\bm{\mathbb{E}}}
\def\ec{\mathcal{E}}
\def\ic{\mathcal{I}}
\def\uc{\mathcal{U}}
\def\U{\bm{U}}
\def\c{c}
\def\R{\mathbb{R}}
\def\I{\bm{I}}

\def\M{\bm{M}}
\def\W{\bm{W}}
\def\id{\mathrm{id}}

\newtheorem{prop}{Proposition}

\begin{document}

%

%

\twocolumn[

\aistatstitle{NODAGS-Flow: Nonlinear Cyclic Causal Structure Learning}

\aistatsauthor{ Muralikrishnna G. Sethuraman \And Romain Lopez \And Rahul Mohan }
\aistatsaddress{Georgia Institute of Technology \And Stanford University, Genentech \And Genentech} 
\aistatsauthor{ Faramarz Fekri \And Tommaso Biancalani \And Jan-Christian H\"utter} 
\aistatsaddress{Georgia Institute of Technology \And Genentech \And Genentech, Corresponding Author}

]

\begin{abstract}
Learning causal relationships between variables is a well-studied problem in statistics, with many important applications in science. However, modeling real-world systems remain challenging, as most existing algorithms assume that the underlying causal graph is acyclic. While this is a convenient framework for developing theoretical developments about causal reasoning and inference, the underlying modeling assumption is likely to be violated in real systems, because feedback loops are common (e.g., in biological systems). Although a few methods search for cyclic causal models, they usually rely on some form of linearity, which is also limiting, or lack a clear underlying probabilistic model. In this work, we propose a novel framework for learning nonlinear cyclic causal graphical models from interventional data, called NODAGS-Flow. We perform inference via direct likelihood optimization, employing techniques from residual normalizing flows for likelihood estimation. Through synthetic experiments and an application to single-cell high-content perturbation screening data, we show significant performance improvements with our approach compared to state-of-the-art methods with respect to structure recovery and predictive performance.

\end{abstract}

\section{INTRODUCTION}
Understanding the causal relationships between interacting variables is a fundamental problem in science \citep{sachs_causal_2005,zhang_integrated_2013,segal_learning_2005} since a causal, or mechanistic understanding is fundamental to correctly predict the effects of previously unobserved interventions on a system. Such systems can be modeled using a directed graph where each variable in the system is associated with a node and the edges represent causal relationships.

With a few notable exceptions \citep{llc,richardson1996discovery, cyclic_equil, bongers2016theoretical}, most work on causal structure learning relies on the assumption that the underlying graph connecting the variables is a directed \emph{acyclic} graph (DAG). This assumption facilitates the definition of a probability distribution over the observed variables for very general functional relationships. It also provides additional regularization to the estimation problem by narrowing down the class of graphs that are compatible with the observed probability distribution (the \emph{Markov Equivalence Class}) \citep{richardson1996discovery}. However, there is compelling evidence that feedback loops are common in many real-world systems, such as those arising in gene-regulatory networks \citep{sachs_causal_2005, freimer_systematic_2022}, violating the acyclicity assumption. These networks can however be probed with a large number of interventions through recent technological advances in biological assays building upon CRISPR/Cas9 and single-cell RNA-Sequencing \citep{dixit_perturb-seq_2016, frangieh2021multimodal}, alleviating the need for the additional regularization provided by the DAG constraint. Moreover, enforcing acyclicity necessitates searching for candidate solutions over large combinatorial search spaces, complicating algorithm design. Combined, this suggests that Cyclic Causal Graphs (CCG) should be better suited to model causal semantics in this regime.

In this work, we present a novel framework for causal discovery that does not rely on the DAG assumption, but instead allows for the presence of cycles in the underlying graph, while also modeling flexible, nonlinear relationships between the observed nodes. It is based on formulating the observation model as the steady state of a discrete dynamical system \citep{llc}. This elegantly allows for cycles in the underlying graph but comes at the cost of necessitating an evaluation of the gradient of the governing functional relationships in the likelihood evaluation. We propose to employ the framework of normalizing flows \citep{papamakarios_normalizing_2021}, in particular contractive residual flows \citep{iresnet, russianroulette}, to deal with this complication. We provide a comparison of our framework, called NODAGS-Flow, to state-of-the-art structure learning methods on various graph recovery and prediction tasks. 

After discussing related work (Section~\ref{sec:rel-work}), we cover the problem setup including relevant background and modeling assumptions in Section~\ref{sec:setup}. Then, we present the proposed NODAGS-Flow framework (Section~\ref{sec:framework}), solving nonlinear cyclic causal discovery through likelihood optimization with contractive residual flows. Finally, we validate NODAGS-Flow on various synthetic benchmarks and on real-world data with genetic interventions (Section~\ref{sec:experiments}). Across the benchmarks, NODAGS-Flow beats state-of-the-art algorithms on nonlinear problems, even in the case when the underlying graph is acyclic, highlighting the practical benefits of NODAGS-Flow.

\section{RELATED WORK}
\label{sec:rel-work}

In causal discovery, the primary goal is to recover the underlying causal graph and the associated conditional probability distributions from observational and potentially interventional data. We next discuss previous approaches on acyclic and cyclic graphs, followed by the key contributions of our approach.

\subsection{Acyclic causal discovery}

Most causal discovery algorithms to date deal with the case of acyclic graphs, and they are commonly categorized into constraint-based, scored-based, and hybrid methods. Constraint-based methods such as the PC algorithm \citep{sprites,triantafillou2015constraint,heinze2018invariant} aim to recover the underlying graph through constraints given by conditional independence relations encoded by causal graphs. Most constraint-based methods suffer from poor scalability and necessitate complicated algorithm design to handle the graphical constraints.

Score-based methods such as GES \citep{Meek1997GraphicalMS,hauser2012characterization} learn the graph structure by optimizing a score function over candidate models. A popular choice of score function is given by the likelihood function in frequentist setups or the posterior likelihood in Bayesian formulations, and their regularized variants, such as the Bayesian Information Criterion (BIC). These methods often employ greedy approaches due to the super-exponential size of the search space.

More recently, the NOTEARS methodology \citep{notears} introduced a continuous constraint for limiting the search space of the optimization problem to DAGs, closing the gap between DAG learning and continuous optimization and avoiding explicit greedy searches over combinatorial structures. Several extensions followed \citep{yu2019dag, ng2020role, zheng20learning, lee2019scaling, dcdi} which allowed for learning DAGs under various assumptions on the underlying causal system. In particular, \cite{dcdi} introduced a Gumbel-Softmax parameterization of the adjacency matrix and interventional masks which extended the method to handle imperfect interventions. While the NOTEARS framework strongly simplifies algorithm design for DAG learning, it necessitates sequentially solving multiple optimization problems, which poses difficulties in applying its nonlinear extensions to larger graphs without further regularization~\citep{dcdfg}.

Hybrid methods combine both previous approaches \citep{tsamardinos2006max,solus2017consistency,wang2017permutation}. Notably, one proposed hybrid approach \citep{khemakhem2021causal} used autoregressive normalizing flows for the underlying model, but strongly relied on the acyclicity assumption to fix an ordering and on constraint-based methods to estimate the skeleton of the underlying graph.

NODAGS-Flow is a score-based method and closest in spirit to the NOTEARS family of algorithms because it starts from a simple score function and is entirely based on continuous optimization. However, it does extend to the cyclic case and by doing so avoids the need for sequential optimization to handle the DAG constraint. In fact, in the presence of interventional data, we show in Sections~\ref{sec:dagextension} and \ref{sec:expsynthetic} how it can beat the performance of NOTEARS and similar algorithms in the case where the underlying graph is a DAG.

\subsection{Cyclic causal discovery}

Cyclic causal discovery methods allow for feedback loops in the underlying causal mechanism, which complicates defining appropriate causal semantics. Early work on this topic extended constraint-based methods to this setting \citep{richardson1996discovery}, allowing the recovery of the underlying Markov Equivalence Class. However, exactly recovering cyclic graphs is more challenging than acyclic ones. For example, in the linear case, without resorting to assumptions such as faithfulness or sparsity, cyclic graphs are impossible to identify from purely observational data but can be  consistently recovered when the interventions satisfy ``pair conditions'' for ordered pairs of nodes through the LLC algorithm \citep{llc}. A more thorough treatment of establishing causal semantics for cyclic models can be found in \cite{bongers2016theoretical}. 

Other notable works are \cite{pmlr-v124-huetter20a} and \cite{cyclic_equil}, which use likelihood maximization for cyclic causal discovery, but they both are limited to either the linear case or rely on a linear approximation of the causal mechanism around the mean of the data, respectively.

Related works that are more disconnected from the literature on causal discovery directly start with modeling causal or mechanistic relationships as arising from a dynamical system and have shown promise in modeling biological systems \citep{yuan2021cellbox, nilsson2022artificial}. However, they lack a precise likelihood model.

Compared to these approaches, NODAGS-Flow provides a clearly defined and extensible likelihood model that handles nonlinear causal relationships.

\subsection{Contributions}

NODAGS-Flow endows the graph with semantics similar to those in \cite{cyclic_equil} and \cite{llc}, modeling the data as generated from the steady state of a dynamical system with an explicit noise model. However, instead of linear functional relationships, we allow for a rich class of nonlinear structural functions. Contrary to methods like NOTEARS \citep{notears} and its nonlinear extensions which necessitate solving a series of optimization problems to deal with the acyclicity constraint, NODAGS-Flow consists of only a single optimization, thus significantly simplifying algorithm design. In particular, our model naturally extends the classical notion of a Structural Equation Model (SEM) \citep{sem1, sem2} and subsumes DAG estimation in these models as a special case.


\section{PROBLEM SETUP}
\label{sec:setup}

\subsection{Cyclic Causal Models via Structural Equations}
Let $G = (V, E)$ represent a causal graph, where $V$, $E$ denote the set of vertices and edges, respectively. Each vertex $v_i \in V$ has an associated random variable $x_i$ corresponding to its observation and $x = (x_1, \ldots, x_d)$ denotes the complete vector of observations. Following the framework proposed by \cite{sem1} and \cite{sem2}, we use a \textit{Structural Equation Model} (SEM), also known as \textit{Structural Causal Model} (SCM), to represent the system.
That is, 
\begin{equation}
    x_i = f_i(\x_{\text{pa}(i)}) + \varepsilon_i \quad i = 1, \ldots, d,
    \label{eq:sem}
\end{equation}
where pa$(i) \subseteq \{1, \ldots, d\}\setminus \{i\}$ is the \textit{parent} set of $x_i$, $f_i$ encodes the functional dependence of $x_i$ on its parents, also referred to as the \textit{causal mechanism} of $x_i$. The parent-child relationships defined by the SEM encode the edges in $G$, i.e., the edge $x_j \to x_i$ exists if and only if $j \in \pa(i)$. The variables $(\varepsilon_1, \ldots, \varepsilon_d)$ are known as the \textit{disturbance} variables. By combining equation (\ref{eq:sem}) over $i=1, \ldots, d$ and writing $f = (f_1, \ldots, f_d)$, we have the following vectorized form:
\begin{equation}
    \x = f(\x) + \e.
    \label{eq:sem_vec}
\end{equation}
Additionally, the SEM also specifies a probability density $p_E(\e)$ over the disturbance variables. We assume that the system is free of \textit{confounders}, that is, the disturbance variables are independent of each other. Finally, we define $x$ as the solution to the system \eqref{eq:sem} for a random draw of $\e$.

In a classical SEM, the underlying graph is acyclic and a solution to \eqref{eq:sem} is naturally given by forward substitution along the topological ordering of the graph. Here, we instead explicitly assume that the mapping $\x \mapsto \e = (\id - f)(x)$ is invertible, where $\id$ is the identity map, and that both $(\id - f)$ and $(\id - f)^{-1}$ are differentiable. This ensures that there is a unique $\x$ that corresponds to each disturbance vector $\e$. Under these conditions, the probability density of $\x$ is well-defined and can be obtained using the change of variable formula for density functions,
\begin{equation}
    p_X(\x) = p_E\big((\id - f)(\x)\big)\big|\det J_{(\id - f)}(\x)\big|,
    \label{eq:likelihood}
\end{equation}
where $J_{(\id-f)}$ denotes the Jacobian matrix of the function $(\id - f)$ evaluated at $\x$. 

\subsection{Modeling Interventions}

One of the key aspects of inferring causal models is the ability to predict the behavior of the system under interventions. Following \cite{sprites} and \cite{sem2}, we consider surgical interventions, i.e., all incoming causal influences to the intervened-upon variables are removed. This results in a mutilated graph $\widetilde{G}$ where the intervened-upon nodes in $G$ have no incoming edges. Following the notational convention of \cite{llc}, we consider $K$ interventional experiments and denote one such experiment by $\ec_k = (\ic_k, \uc_k)$, where $\ic_k$ is the set of intervened-upon nodes and $\uc_k$ is the set of passively observed nodes. Let $\U_k \in \{0, 1\}^{d\times d}$ be a diagonal matrix such that $(\U_k)_{ii} = 1$ if and only if $v_i \in \uc_k$. Under the interventional setting $\ec_k$, the SEM now becomes
\begin{equation}
    \x = \U_k f(\x) + \U_k\e + \c,
    \label{eq:sem_int}
\end{equation}
where $\c$ denotes the value of the intervened-upon variables, i.e., $c_i = x_i$ if $i \in \ic_k$ and 0 otherwise. Equation (\ref{eq:sem_int}) corresponds to the assumption that the intervened-upon nodes are fixed and the equations for the passively observed variables remain unchanged. Similar to before, we assume that the functions $(\id - \U_k  f)$ are invertible for all $k$. From equation (\ref{eq:sem_vec}), the density function $\x$ for experiment $\ec_k$ is
\begin{align}
    p_X(\x) = {} & p_X (\x_{\ic_k})p_E\big([(\id - \U_k f)(\x)]_{\uc_k}\big) \nonumber\\
    & \quad |\det J_{(\id - \U_k f)}(\x)|,
    \label{eq:likeli_inter}
\end{align}
where $p_E\big([(\id - \U_k f)(\x)]_{\uc_k}\big)$ denotes subsetting the likelihood to only the variables in $\uc_k$.
Here, we assume surgical interventions as introduced above and that the intervention targets are known.

Given a set of interventions, we would like to learn the underlying parent-child relations in the graph by maximizing the likelihood of the generated data. This requires the computation of $|\det J_{(\id - \U_k f)}(\x)|$ to be tractable for each sample. To that end, we employ normalizing flows to model the map $\x \mapsto \e$ (see Section~\ref{sec:resflows}). Normalizing flows provide a rich class of functions for which the Jacobian determinant is easily computable.

\section{NODAGS-FLOW: RESIDUAL FLOW FOR CAUSAL LEARNING}
\label{sec:framework}

In this section, we present the individual components of the NODAGS-Flow framework, namely contractive residual flows for calculating the log-det term, neural network architectures for modeling $f$, diagonal preconditioning to enable DAG learning, and finally the full score function that is being optimized.
For ease of notation, in the following, we collect all model parameters into a single vector $\theta$ if not explicitly noted otherwise.

\subsection{Contractive Residual Flows for Causal Learning}
\label{sec:resflows}

Residual flows are a class of invertible functions of the form 
\begin{equation}
\label{eq:norm-flow}
z^{\prime} = z + g(z).
\end{equation}
The name comes from the resemblance to the structure of residual networks \citep{resnet}. We note that solving \eqref{eq:sem_vec} for $\e$, the relationship governing our causal model is $\e = \x - f(\x)$, which is of the same form as \eqref{eq:norm-flow} with $g(z) = -f(z)$. To ensure that our model is well-defined, we need the invertibility of this transformation, along with invertibility for every possible intervention in \eqref{eq:sem_int}. The Contractivity of $f$ is one constraint that guarantees this invertibility. In the following, we outline the machinery introduced by \cite{iresnet,russianroulette} to exploit this constraint for tractable generative modeling, which we adapt for structure learning.

A function $f: \R^d \to \R^d$ is said to be \emph{contractive} if there exists a constant $L < 1$ such that for any two points $\z_1, \z_2 \in \R^d$, 
$$\|f(\z_1) - f(\z_2)\| \leq L \|\z_1 - \z_2\|.$$
It then follows from the Banach fixed point theorem \citep{banachfp} that if $f$ is contractive, then the residual transformation $\id - \U_k f$ is invertible for any masking matrix $\U_k$. 

Although Banach's fixed point theorem guarantees invertibility, we have no analytical form for the inverse. However, the inverse can be obtained via fixed-point iterations. That is, starting with an arbitrary $\x_0$, repeatedly compute $\x_{k+1} = f(\x_k) + \e$ for all $k > 0$. The Banach fixed point theorem guarantees that this procedure converges. Moreover, the rate of convergence is exponential in the number of iterations $k$ and bounded by $O(L^k)$. This fixed-point iteration also provides an explicit interpretation of the causal semantics of the system in terms of a discrete dynamical system with fixed disturbances.

To efficiently approximate contractive functions and evaluate \eqref{eq:likeli_inter}, two technical challenges remain: enforcing contractivity and evaluating the log-determinant of the Jacobian. To address the first, we employ neural networks to approximate $f$ and note that a fixed Lipschitz constant can be enforced on a neural network layer by rescaling its weights by its spectral norm as shown by \cite{iresnet} and \cite{specnormlip}. The composition of multiple such Lipschitz layers is still a Lipschitz function.

To address the second challenge, we employ the unbiased estimator of the log-det term introduced in \cite{russianroulette}. Since $f$ is contractive, by extending the power series expansion of $\log (1 + x)$ to matrices, we have 
\begin{align}
    \log |\det J_{(\id-f)}(\x)| &= \log |\det(\I - J_f(\x))| \nonumber\\
    &= -\sum_{k=1}^{\infty}\frac{1}{k}\text{Tr}\Big\{J_{f}^k(\x)\Big\},
\end{align}
where $\I$ denotes the identity matrix.
The contractivity of $f$ guarantees the convergence of the above series. The trace of $J_f^k(\x)$ can be efficiently computed using the \textit{Hutchinson trace estimator} \citep{hutchtraceestimator}:
\begin{equation}
    \text{Tr}\Big\{J_{f}^k(\x)\Big\} = \mathbb{E}_w[w^{\top}J_{f}^k(\x)w],
\end{equation}
where $w$ is a random vector with zero mean and unit covariance. \cite{iresnet} evaluate the above power series by truncating it to a finite number of terms. However, this approach has the drawback of being biased. \cite{russianroulette} improve on this by adding additional randomization to this evaluation, truncating the power series at a \emph{random} cut-off $n\sim p(N)$, where $p$ is a probability distribution over natural numbers $\mathbb{N}$, and re-weighting the terms in the power series to obtain an unbiased estimator. Hence the final estimator we use in NODAGS-Flow is now given by,
\begin{equation}
    \log |\det J_{(\id - f)}(\x)| = -\mathbb{E}_{n, w}\Bigg[\sum_{k=1}^n \frac{w^{\top}J_{f}^k(\x)w}{k\cdot P(N \geq k)}\Bigg].
    \label{eq:logdetfinal}
\end{equation}
Here, we choose $n \sim \mathrm{Poi}(N)$, a Poisson distribution with intensity $N$ that we treat as a hyperparameter.



\subsection{Parametrization and Sparsity Penalization}
\label{sec:param-sparse}

A first na\"ive implementation of the causal mechanism $f$ through a Multi-Layer Perceptron (MLP) did not produce promising results due to the presence of self-cycles (dependencies from a node $v$ to itself). To address this, and to simultaneously add sparsity penalization on the dependency structure of $f$, we add a dependency mask $\M' \in \{0,1\}^{d \times d}$ with zero-diagonal that we apply via masking entries of x. That is, we introduce an MLP $g_{\theta}$ and set
\begin{equation}
    [f_{\theta}(x)]_i = [g_\theta(\M'_{i,*} \odot x)]_i, \quad i=1,\dots,d,
    \label{eq:masking}
\end{equation}
where $\odot$ denotes the Hadamard product. Similar to \citet{dcdi} and \citet{dcdfg}, to enable efficient learning of $\M'$ during training, we model its entries as draws from a Gumbel-Softmax distribution $\M' \sim \M_\theta$ \citep{jang2016categorical} with straight-through gradient estimation. Sparsity penalization is then achieved by adding $\lambda \, \E_{\M'\sim\M_\theta}[\|\M'\|_1]$ to the loss function for a regularization parameter $\lambda > 0$, where the expectation can be calculated explicitly from the parameters of $\M_\theta$. The Gumbel-Softmax parametrization $\M_\theta$ also offers access to an estimator for the underlying graph.

We note that in the special case of a 1-layer MLP, $f_\theta(x) = \sigma(\W^\top x)$ for a weight matrix $\W$ and an activation function $\sigma$, we can achieve the above more efficiently via enforcing a zero diagonal on $\W$ and direct $\ell_1$-penalization on the entries of $\W$.

\subsection{Extension to Non-Contractive DAGs via Preconditioning}
\label{sec:dagextension}

Although contractivity is sufficient for the invertibility of $\id - f$, it is not a necessary condition. Indeed, for the case of DAGs, the causal mechanism $f$ need not be contractive for $\id-f$ to be invertible, as the fixed point iterations will always converge after $d$ steps. However, $f$ being contractive is still convenient to efficiently estimate the absolute Jacobian-determinant as explained above. Via a diagonal rescaling (or preconditioning) of the model parameters, we can significantly increase the space of models that can be represented by contractive functions $f$. In particular, this includes all models whose underlying graph is a DAG, as shown in the following proposition. 

\begin{prop}[Non-contractive to Contractive]
Let $(G, f)$ represent a causal DAG and its causal mechanism. If $f$ is a non-contractive function, then there exists $\tilde{f}$ of the form $\tilde{f} = \Lambda \circ f \circ \Lambda^{-1}$, where $\Lambda$ denotes multiplication with a diagonal matrix with positive diagonal entries such that $\tilde{f}$ is contractive.    
\label{prop:ncon-con}
\end{prop}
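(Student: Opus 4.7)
The approach I would take is to use a topological ordering of the DAG $G$ together with a geometric diagonal rescaling. Since $G$ is acyclic, I can relabel its vertices so that $j \in \pa(i)$ implies $j < i$. Because each $f_i$ depends only on $\x_{\pa(i)}$, the Jacobian $J_f(\x)$ is then strictly lower triangular (zero on and above the diagonal) at every point $\x$. Strictly lower triangular matrices can always be shrunk to arbitrarily small norm by a diagonal conjugation, and this is what I would exploit.

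Concretely, I would take $\Lambda = \mathrm{diag}(1, \alpha, \alpha^2, \ldots, \alpha^{d-1})$ for a parameter $\alpha \in (0,1)$ to be chosen. By the chain rule, $J_{\tilde f}(\x) = \Lambda\, J_f(\Lambda^{-1}\x)\, \Lambda^{-1}$, whose $(i,j)$-entry equals $\alpha^{i-j} (J_f(\Lambda^{-1}\x))_{ij}$. Only $i > j$ can contribute, and in that regime $\alpha^{i-j} \leq \alpha$. Reading ``non-contractive'' quantitatively as $f$ being $C^1$ with $\sup_\x \|J_f(\x)\|_{op} \leq L$ for some finite $L$, a Frobenius-norm bound gives
\begin{equation}
\|J_{\tilde f}(\x)\|_{op} \;\leq\; \|J_{\tilde f}(\x)\|_F \;\leq\; \alpha\, \|J_f(\Lambda^{-1}\x)\|_F \;\leq\; \alpha L \sqrt{d},
\end{equation}
uniformly in $\x$, where I used that a $d \times d$ matrix of operator norm at most $L$ has Frobenius norm at most $L\sqrt d$. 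Choosing any $\alpha < 1/(L\sqrt d)$ then forces $\sup_\x \|J_{\tilde f}(\x)\|_{op}$, which equals the global Lipschitz constant of $\tilde f$, to be strictly less than $1$. Hence $\tilde f$ is contractive. Translating back to the original (unsorted) node labels, this corresponds to $\Lambda_{ii} = \alpha^{\pi(i)}$, where $\pi(i)$ is the topological rank of node $i$; this is still a positive diagonal matrix, matching the form claimed in the proposition.

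The main obstacle is interpretive rather than technical: the statement leaves the regularity of $f$ implicit, and the above argument produces a genuine \emph{global} contraction only if we have a uniform pointwise bound on the Jacobian. Under merely local Lipschitz continuity one can still obtain contractivity on any given bounded set of interest, but the conclusion as literally stated needs that qualification. Once uniform regularity is granted, the only ingredients used are the DAG structure (for strict lower triangularity), the chain rule, and the crude estimate $\alpha^{i-j} \leq \alpha$ for $i > j$.
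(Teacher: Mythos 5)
Your proof is correct and follows essentially the same route as the paper's: topologically sort the DAG so that $J_f$ is strictly lower triangular, conjugate by a diagonal matrix whose entries decay geometrically along the order, and bound the resulting operator norm via an entrywise estimate (the paper's recursively defined $\Lambda_{i,i} = \tfrac{d^2L}{c}\max_{j>i}\Lambda_{j,j}$ is just your $\alpha^{i-1}$ scaling up to a global constant, and it uses an $\ell_1$/$\ell_2$ column-sum bound where you use the Frobenius norm). Your caveat about needing a uniform Lipschitz bound is consistent with the paper, which states the proposition in the appendix under the explicit hypothesis that $f$ is $L$-Lipschitz and invokes Rademacher's theorem for the non-differentiable case.
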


We refer to the appendix for proof of the above proposition. Proposition 1 allows us to rewrite the SEM purely in terms of a contractive function ($\tilde{f}$) and a diagonal matrix ($\Lambda$), when the underlying graph is a DAG. That is, 
\begin{equation}
    \x = \Lambda^{-1}\circ \tilde{f} \circ \Lambda (\x) + \e. 
\end{equation}
Hence, for a given observed set $\mathcal{U}_k$, the logarithm of the determinant of the Jacobian now becomes
\begin{align}
\log|&\det J_{\Lambda^{-1} \circ (I - \U_k\tilde{f})\circ \Lambda}| \nonumber\\
&=\log | \det \Lambda^{-1} | + \log | \det \Lambda | + \log|\det J_{(I - \U_k \tilde{f})}|\nonumber\\
&=\underbrace{\log | \det \Lambda | - \log | \det \Lambda |}_{=0} + \log|\det J_{(I - \U_k \tilde{f})}|\nonumber\\
&= \log|\det J_{(I - \U_k \tilde{f})}|, 
\end{align}
which only depends on a contractive function and hence can be estimated efficiently using the procedure detailed in Section~\ref{sec:resflows}. In training the model, we treat $\Lambda$ as a learnable parameter to be optimized via the log-likelihood function. 

\subsection{Score Function for Differentiable Causal Learning}

Given a set of interventional experiments $\{\ec_k\}_{k=1}^K$ and corresponding observations, we would like to learn the graph structure as well as the underlying functions governing the parent-child relations. To that end, similar to previous work \citep{dcdi, dcdfg}, we use the log-likelihood of the not-intervened-on nodes as a score function. That is, approximating the log-det term by \eqref{eq:logdetfinal}, we consider
\begin{multline}
    \mathcal{L}\Big(\theta, f_\theta, \{\x^{(k,i)}\}_{k=1, i=1}^{M, N_k}\Big) = \\
    \sum_{k=1}^M \sum_{i=1}^{N_k}\bigg[\log p_{E,\theta}\Big([(\id - \U_k f_{\theta})(\x^{(k, i)})]_{\uc_k}\Big) \\
    - \E_{n, w} \bigg\{\sum_{r=1}^n\frac{w^{\top}\big[J_{\U_kf_{\theta}}^r\big(\x^{(i,k)}\big)\big]w}{r\cdot P(N \geq r)}\bigg\}\Bigg],
    \label{eq:score}
\end{multline}
where $\x^{(i, k)}$ denotes the $i$-th sample in the $k$-th experiment, and $p_{E,\theta}$ is parametrized as independent Gaussian distributions with learnable means and standard deviations.
Together with $\ell_1$ penalization introduced in Section~\ref{sec:param-sparse} with parameter $\lambda > 0$ and the preconditioning in Section~\ref{sec:dagextension}, inference is performed by solving the following optimization problem with stochastic optimization methods:
\begin{equation}
    \max_{\theta, \Lambda} \mathcal{L}(\theta, \Lambda^{-1} \circ f_{\theta} \circ \Lambda) - \lambda \, \E_{\M'\sim\M_\theta}[\| \M' \|_1].
\end{equation}

\section{EXPERIMENTS}
\label{sec:experiments}

\begin{figure*}[t]
    \centering
    \includegraphics[width=0.8\linewidth]{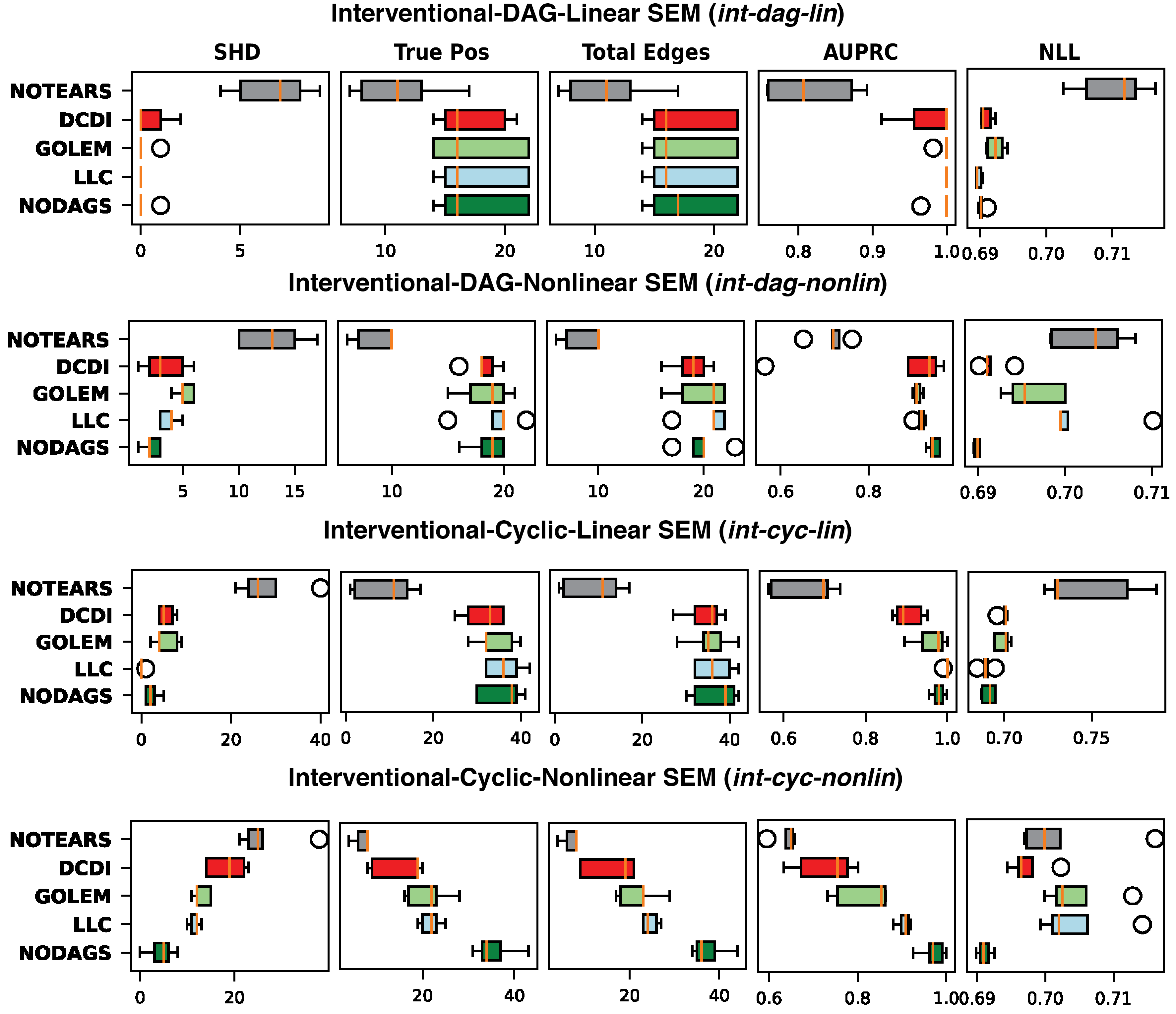}
    \caption{Performance on synthetic interventional data. The box plots show the median and inter-quartile ranges over the independent trails.  }
    \label{fig:inter-results}
\end{figure*}

We tested NODAGS-Flow on synthetic and real-world datasets. The performance of NODAGS-Flow is compared with some of the existing state-of-the-art causal discovery algorithms, LLC \citep{llc} (linear \& cyclic graphs), GOLEM \citep{ng2020role} (linear and acyclic), NOTEARS \citep{notears} (linear and acyclic), and DCDI \citep{dcdi} (nonlinear and acyclic). Of the chosen baselines, only DCDI and LLC are capable of handling interventional data out of the box, the other two algorithms were modified to allow for learning from interventions by summing over different experimental regimes and masking out loss-terms corresponding to intervened-upon nodes as in \eqref{eq:score}.

\subsection{Experiments on synthetic data}
\label{sec:expsynthetic}

We considered both observational and interventional data for the synthetic datasets. The interventions were assumed to be perfect with known targets. Each dataset was generated from graphs with $d=20$ nodes and for each intervention, 5000 observations were sampled. The observational data consists of 20,000 samples sampled from the graph. For the function $f$, we considered three different cases, namely 
(1) a linear function, $f(x) = \bm{W}^\top x$,
(2) a nonlinear function, $f = \texttt{ReLU}(\bm{W}^{\top}\x)$, a single-layer MLP with ReLU (rectified linear unit) activation, ensuring contractivity by rescaling by the operator norm,
(3) a non-contractive nonlinear function, $f = \texttt{SELU}(\bm{W}^{\top}x)$, a single-layer MLP with SELU (Scaled Exponential Linear Unit) activation, with the underlying graph being a DAG.
\begin{table}[]
    \centering
        \caption{Synthetic experiment settings.}
    \label{tab:exp-sum}
    \begin{tabular}{|c|c|c|c|}
        \hline
         \textbf{Setting} & \textbf{Interventions} & \textbf{SEM} & \textbf{Cyclic} \\
         \hline
         \emph{int-dag-lin} & True & Linear & False\\
         \textit{int-dag-nonlin} & True & Nonlinear & False\\
         \textit{int-cyc-lin} & True & Linear & True\\
         \textit{int-cyc-nonlin} & True & Nonlinear & True\\
         \textit{obs-lin} & False & Linear & False\\
         \textit{obs-nonlin} & False & Nonlinear & False\\
         \hline
    \end{tabular}
\end{table}

\begin{figure*}[t]
    \centering
    \includegraphics[width=0.8\linewidth]{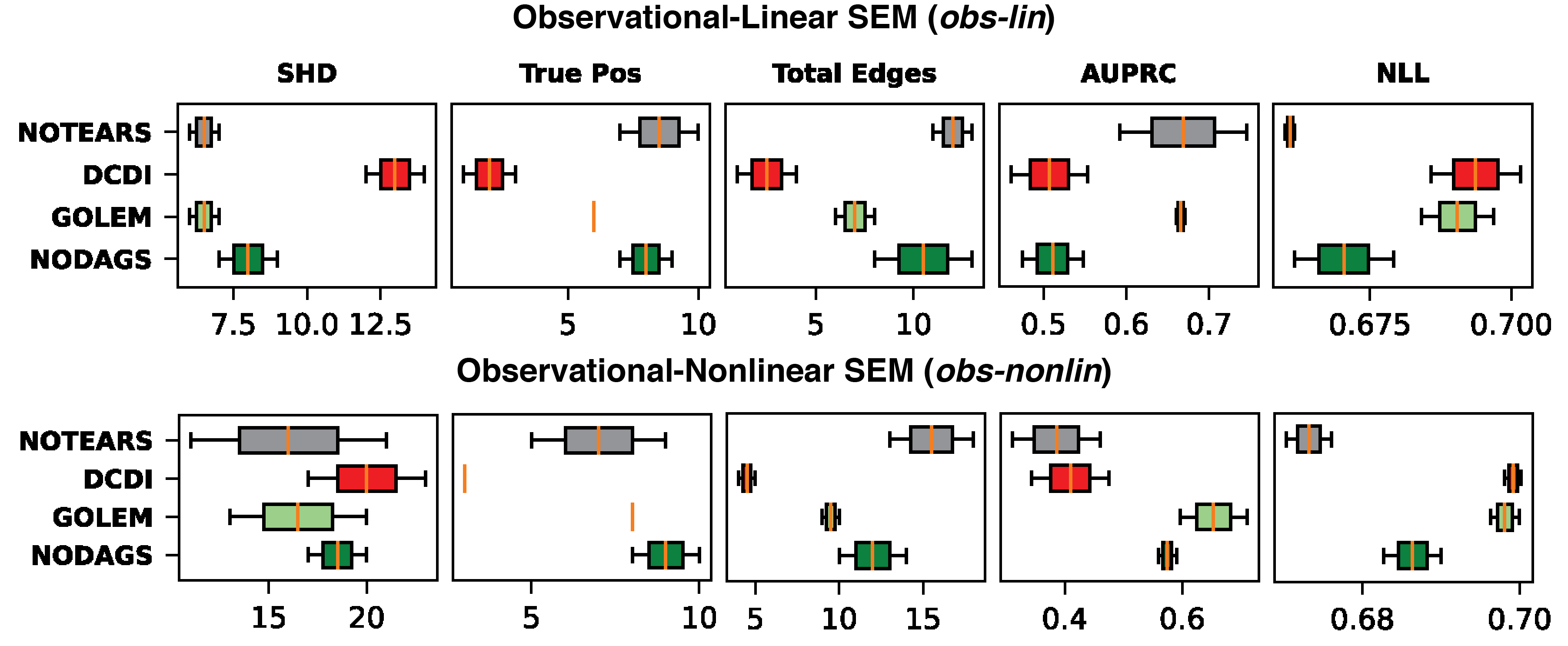}
    \caption{Performance on synthetic observational data. The box plots show the median and inter-quartile ranges over the independent trails. }
    \vspace{-1cm}
    \label{fig:obser-results}
\end{figure*}

\begin{figure*}[t]
\centering
\includegraphics[width=0.6\linewidth]{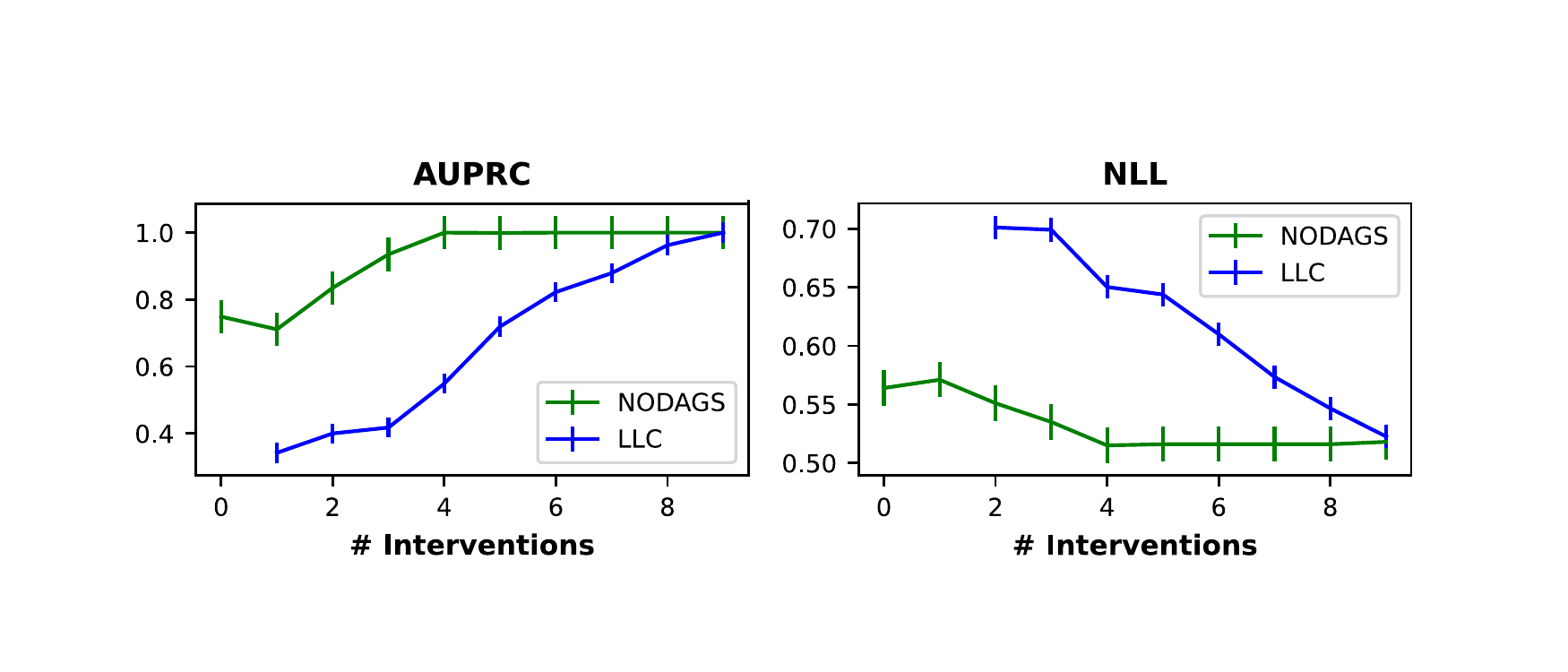}
    \vspace{-0.5cm}
\caption{Performance comparison between LLC and NODAGs-Flow as the number of interventions used for training the model is increased from 0 to 9 on a 10-node graph.}
\label{fig:perf-v-int}
\end{figure*}

In total we obtain 6 different settings for the synthetic experiments, summarized in Table \ref{tab:exp-sum}. For the setting \mbox{\textit{int-dag-nonlin}}, the causal mechanism $f$ was taken from case (3) and for the settings \mbox{\textit{int-cyc-nonlin}} and \mbox{\textit{obs-nonlin}}, $f$ was taken from case (2). The latent distribution $p_E(\e)$ was chosen as a Gaussian distribution with the same variances. The graphs were generated using an Erd\H{o}s-R\'enyi random graph model with an expected edge density of 2, allowing for cycles, whereas in case (3), we ensured acyclicity by creating a causal order and ensuring that the parents for each node always come from its predecessor in the causal order. The weight matrices were sampled from the uniform distribution, with post-scaling to ensure that the overall function is contractive for the first two settings. 

\paragraph{Performance evaluation} The performance was evaluated with respect to the following metrics: (1) \textit{Structural Hamming Distance} (SHD), (2) Total number of \textit{True Positive} edges (True Pos) predicted, (3) \textit{Total Edges} edges predicted, (4) \textit{Area Under Precision-Recall Curve} (AUPRC), and (5) holdout \textit{Interventional-NLL} (NLL) \citep{gentzel2019case}, the negative log-likelihood over unseen interventions. SHD, True Pos, Total Edges, and AUPRC measure the accuracy of the recovered graph structure whereas NLL measures the predictive power of the model over unseen interventions. For the interventional data sets, the training data consisted of single-node interventions across all nodes. For both the interventional and observational data sets, the test set consisted of interventions on two or three nodes (random with equal probability), randomly sampled from the total set of nodes. 

The results of the synthetic experiments are reported in Figures \ref{fig:inter-results} and \ref{fig:obser-results}. In both figures, the box plots show the median and the inter-quartile ranges over independent trials. In Figures \ref{fig:inter-results} and \ref{fig:obser-results}, each column shows the performance with respect to the metric stated at the top of the column for the different settings shown in Table \ref{tab:exp-sum}.  On linear interventional data (Figure \ref{fig:inter-results}, \mbox{\textit{int-dag-lin}} and \mbox{\textit{int-cyc-lin}}), NODAGS-Flow attains comparable performance to that of LLC (which was specifically designed for the interventional linear case), both in terms of graph structure recovery and the prediction of unseen interventions. In \mbox{\textit{int-dag-lin}}, GOLEM and NOTEARS match the performance of LLC and NODAGS-Flow as the setting is well specified for these models. As expected GOLEM and DCDI drop in performance when cycles are introduced (Figure \ref{fig:inter-results}, \mbox{\textit{int-cyc-lin}}).  

\begin{figure*}[t]
    \centering
    \includegraphics[width=0.94\linewidth]{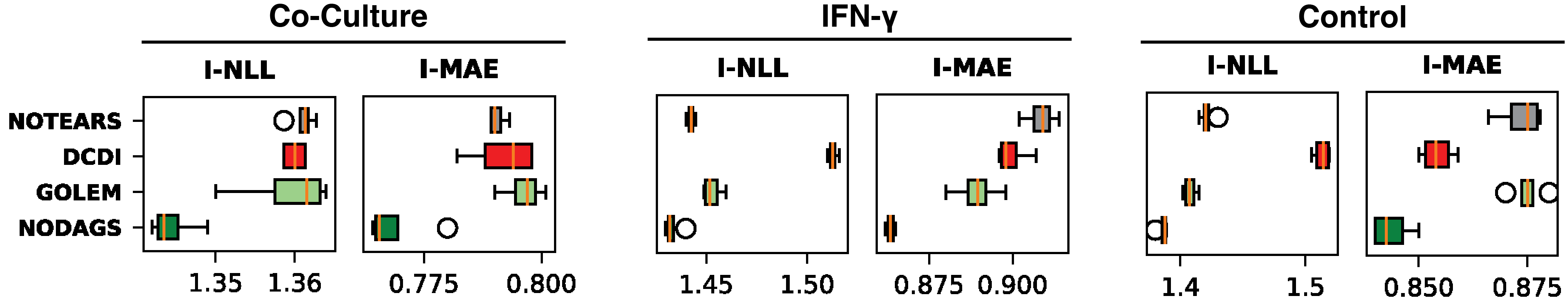}
    \caption{Performance comparison on Perturb-CITE-seq \cite{frangieh2021multimodal} data. The box plots show the median and inter-quartile ranges over the independent trails.}
    \label{fig:pert-cite-seq}
\end{figure*}

On non-linear interventional data (Figure \ref{fig:inter-results}), NODAGS-Flow performs the best when the graph contains cycles (\mbox{\textit{int-cyc-nonlin}}) followed by LLC. When the causal graph is a DAG and the causal mechanism non-contractive (Figure \ref{fig:inter-results}, \mbox{\textit{int-dag-nonlin}}), the added learnable parameter $\Lambda$ allows NODAGS-Flow to learn a non-contractive function by rescaling a contractive function $f$ by $\Lambda$. In this case, NODAGS-Flow and DCDI are the best performing models. This highlights the benefits of a larger, potentially simpler search space for structure learning provided by our approach compared to specifically enforcing DAG constraints.

On the other hand, in the observational setting, due to the inherent identifiability issues caused by not confining the search space to DAGs, NODAGS-Flow trails behind the other methods enforcing a DAG constraint, see Figure \ref{fig:obser-results}. We exclude LLC in Figure \ref{fig:obser-results} as it is incapable of handling purely observational data.

\paragraph{Scaling with Interventions}
In the previous experiments, we ensured that the models were provided with interventions over all the single nodes in the graph. Here, we test the model's capability to learn the graph structure with limited interventional information. We compare NODAGs-Flow with LLC as we increase the number of interventions provided during training in the case of a linear contractive SEM.

From Figure (\ref{fig:perf-v-int}) we can see that NODAGS-Flow requires significantly fewer interventions compared to LLC as it attains close to perfect structure recovery around 4 interventions on a $d=10$ node graph. It is also important to note that LLC cannot work on purely observational data as it subsets the data according to the performed interventions, whereas NODAGS-Flow can handle both observational and interventional data out of the box.

\subsection{Experiment on Real-World Transcriptomics Data}

Here, we present an experiment focused on learning a gene regulatory network from gene expression data with genetic interventions (Perturb-seq), a type of dataset that allows to causally investigate biology at an unprecedented scale. Recent advances \citep{dixit_perturb-seq_2016} have made it possible to perform such genetic interventions at large scales (in the order of hundreds or thousands of genes \citep{replogle2022mapping}) and be able to measure the effect of full gene expression profile on the order of hundreds of thousands of cells.

We focus on a Perturb-CITE-seq \citep{frangieh2021multimodal} dataset that investigated drivers of resistance to Immune Checkpoint Inhibitors (ICI). It contains gene expressions taken from 218,331 melanoma cells split over three different conditions, namely: (1) control (57,627 cells) , (2) co-culture (73,114 cells), and (3) interferon (IFN)-$\gamma$ (87,590 cells). Each measurement contains the identity of the target genes and the expression profiles of each gene in the genome.

Due to practical and computational limitations, we restrict our experiment to a subset of 61 genes out of approximately 20,000 genes in the genome. For interventions, we chose all the single-gene interventions corresponding to the 61 chosen genes. Each condition is considered a separate dataset and we train NODAGS-Flow and the baselines on these datasets separately. Since there is no ground-truth DAG available, we evaluate our model based on its predictive power on unseen interventions. To that end, we perform 5 splits on each of the datasets into 90\% training and 10\% test interventions. Interventional NLL (\texttt{I-NLL}) and the Interventional \textit{Mean Absolute Error} (\texttt{I-MAE}) were used as metrics to evaluate the models.\texttt{ I-MAE} was calculated as the mean of $\|f(x)-x\|_1/d$ over all observations $x$ in a hold-out dataset.

From Figure (\ref{fig:pert-cite-seq}) we can see that NODAGS-Flow outperforms all the baselines with respect to both metrics. Of all the baselines LLC seemed to attain the worst performance, we, therefore, discuss the performance comparison with LLC in more detail in the appendix. This shows that learning cycles in the graph allow for better learning of the underlying distribution and thereby improve the predictive power of the model.
Figure \ref{fig:pert-cite-seq-adj} shows the cluster map obtained from the adjacency matrix learned by NODAGS-Flow on the Co-culture partition of the Perturb-CITE-seq datasets. 

\begin{figure}
   \centering
   \includegraphics[width=\linewidth]{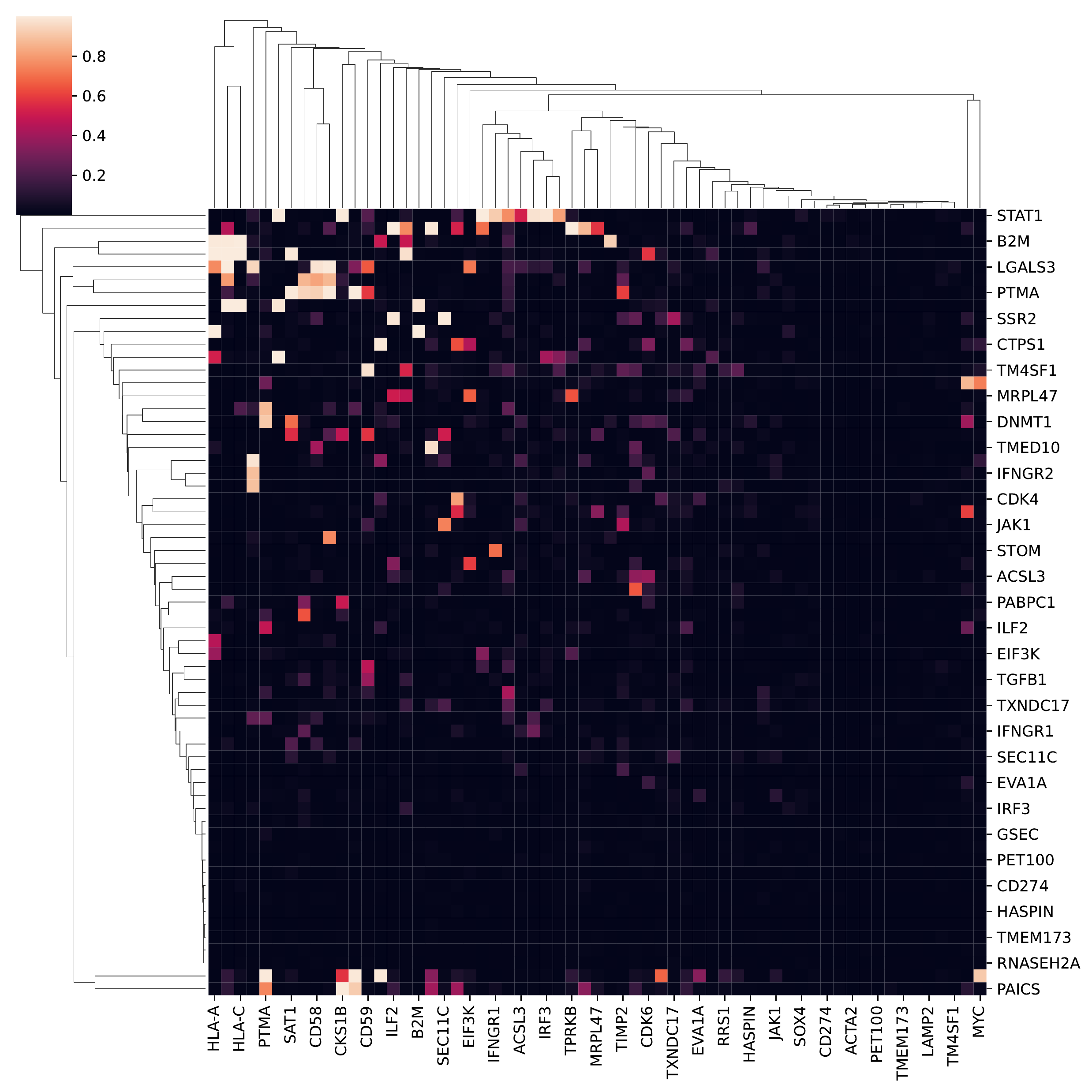}
   \caption{Adjacency matrix of the graph learned by NODAGS-Flow on the Perturb-CITE-seq data set (Co-culture).}
   \label{fig:pert-cite-seq-adj}
\end{figure}

\section{DISCUSSION}
\label{sec:discussion}

We proposed NODAGS-Flow, a novel causal discovery approach that is capable of learning nonlinear and cyclic relations between variables through a simple optimization framework, avoiding optimization problems with complex constraints such as NOTEARS. Experiments on synthetic interventional data showed matching performance with state-of-the-art methods (LLC) on linear data and superior performance when recovering nonlinear relationships, both in the case of cyclic and acyclic causal graphical models.

We also presented an application of our approach on real-world gene expression data with genetic interventions (Perturb-CITE-seq), where we learned a gene-regulatory network on 61 genes. NODAGS-Flow was able to achieve better predictive performance on unseen interventions through an interpretable, mechanistic model that allows for feedback loops. We hope that applications on more biological datasets could enhance understanding of transcriptomic regulation and aid in the design of novel perturbations.

Interesting potential extensions to our model include (1) incorporating more realistic measurement noise models, which have been shown to significantly affect the quality of transcriptomic machine-learning tools \citep{grun2014validation, lopez2018deep}, (2) explore imperfect interventions and the case where the intervention targets are unknown which is quite common in biological settings, and (3) scaling up the model to handle larger graphs, potentially by incorporating ideas from low-rank models \citep{segal_learning_2005, dcdfg}.

\section{Acknowledgments}
\label{sec:ack}

We would like to thank Kathryn Geiger-Schuller and Oana Ursu for helpful suggestions on the pre-processing and gene selection for our experiments on transcriptomics data.

\bibliography{references}

\begin{thebibliography}{}

\bibitem[Ambrosio et~al., 2000]{ambrosio2000functions}
Ambrosio, L., Fusco, N., and Pallara, D. (2000).
\newblock {\em Functions of bounded variation and free discontinuity problems}.
\newblock Courier Corporation.

\bibitem[Bakshy et~al., 2018]{bakshy2018ae}
Bakshy, E., Dworkin, L., Karrer, B., Kashin, K., Letham, B., Murthy, A., and
  Singh, S. (2018).
\newblock Ae: A domain-agnostic platform for adaptive experimentation.
\newblock In {\em Conference on Neural Information Processing Systems}, pages
  1--8.

\bibitem[Behrmann et~al., 2019]{iresnet}
Behrmann, J., Grathwohl, W., Chen, R.~T., Duvenaud, D., and Jacobsen, J.-H.
  (2019).
\newblock Invertible residual networks.
\newblock In {\em International Conference on Machine Learning}, pages
  573--582. PMLR.

\bibitem[Bollen, 1989]{sem1}
Bollen, K.~A. (1989).
\newblock {\em Structural equations with latent variables}, volume 210.
\newblock John Wiley \& Sons.

\bibitem[Bongers et~al., 2016]{bongers2016theoretical}
Bongers, S., Peters, J., Sch{\"o}lkopf, B., and Mooij, J.~M. (2016).
\newblock Theoretical aspects of cyclic structural causal models.
\newblock {\em arXiv preprint arXiv:1611.06221}.

\bibitem[Brouillard et~al., 2020]{dcdi}
Brouillard, P., Lachapelle, S., Lacoste, A., Lacoste-Julien, S., and Drouin, A.
  (2020).
\newblock Differentiable causal discovery from interventional data.
\newblock {\em Advances in Neural Information Processing Systems},
  33:21865--21877.

\bibitem[Chen et~al., 2019]{russianroulette}
Chen, R.~T., Behrmann, J., Duvenaud, D.~K., and Jacobsen, J.-H. (2019).
\newblock Residual flows for invertible generative modeling.
\newblock {\em Advances in Neural Information Processing Systems}, 32.

\bibitem[Dixit et~al., 2016]{dixit_perturb-seq_2016}
Dixit, A., Parnas, O., Li, B., Chen, J., Fulco, C.~P., Jerby-Arnon, L.,
  Marjanovic, N.~D., Dionne, D., Burks, T., Raychowdhury, R., Adamson, B.,
  Norman, T.~M., Lander, E.~S., Weissman, J.~S., Friedman, N., and Regev, A.
  (2016).
\newblock Perturb-{Seq}: {Dissecting} {Molecular} {Circuits} with {Scalable}
  {Single}-{Cell} {RNA} {Profiling} of {Pooled} {Genetic} {Screens}.
\newblock {\em Cell}, 167(7):1853--1866.e17.

\bibitem[Frangieh et~al., 2021]{frangieh2021multimodal}
Frangieh, C.~J., Melms, J.~C., Thakore, P.~I., Geiger-Schuller, K.~R., Ho, P.,
  Luoma, A.~M., Cleary, B., Jerby-Arnon, L., Malu, S., Cuoco, M.~S., et~al.
  (2021).
\newblock Multimodal pooled {Perturb}-{CITE}-{seq} screens in patient models
  define mechanisms of cancer immune evasion.
\newblock {\em Nature genetics}, 53(3):332--341.

\bibitem[Freimer et~al., 2022]{freimer_systematic_2022}
Freimer, J.~W., Shaked, O., Naqvi, S., Sinnott-Armstrong, N., Kathiria, A.,
  Garrido, C.~M., Chen, A.~F., Cortez, J.~T., Greenleaf, W.~J., Pritchard,
  J.~K., and Marson, A. (2022).
\newblock Systematic discovery and perturbation of regulatory genes in human
  {T} cells reveals the architecture of immune networks.
\newblock {\em Nature Genetics}, pages 1--12.

\bibitem[Gentzel et~al., 2019]{gentzel2019case}
Gentzel, A., Garant, D., and Jensen, D. (2019).
\newblock The case for evaluating causal models using interventional measures
  and empirical data.
\newblock {\em Advances in Neural Information Processing Systems}, 32.

\bibitem[Gr{\"u}n et~al., 2014]{grun2014validation}
Gr{\"u}n, D., Kester, L., and Van~Oudenaarden, A. (2014).
\newblock Validation of noise models for single-cell transcriptomics.
\newblock {\em Nature Methods}, 11(6):637--640.

\bibitem[Hauser and B{\"u}hlmann, 2012]{hauser2012characterization}
Hauser, A. and B{\"u}hlmann, P. (2012).
\newblock Characterization and greedy learning of interventional markov
  equivalence classes of directed acyclic graphs.
\newblock {\em The Journal of Machine Learning Research}, 13(1):2409--2464.

\bibitem[He et~al., 2016]{resnet}
He, K., Zhang, X., Ren, S., and Sun, J. (2016).
\newblock Deep residual learning for image recognition.
\newblock In {\em Proceedings of the IEEE conference on computer vision and
  pattern recognition}, pages 770--778.

\bibitem[Heinze-Deml et~al., 2018]{heinze2018invariant}
Heinze-Deml, C., Peters, J., and Meinshausen, N. (2018).
\newblock Invariant causal prediction for nonlinear models.
\newblock {\em Journal of Causal Inference}, 6(2).

\bibitem[Huetter and Rigollet, 2020]{pmlr-v124-huetter20a}
Huetter, J.-C. and Rigollet, P. (2020).
\newblock Estimation rates for sparse linear cyclic causal models.
\newblock In Peters, J. and Sontag, D., editors, {\em Proceedings of the 36th
  Conference on Uncertainty in Artificial Intelligence (UAI)}, volume 124 of
  {\em Proceedings of Machine Learning Research}, pages 1169--1178. PMLR.

\bibitem[Hutchinson, 1989]{hutchtraceestimator}
Hutchinson, M.~F. (1989).
\newblock A stochastic estimator of the trace of the influence matrix for
  {L}aplacian smoothing splines.
\newblock {\em Communications in Statistics-Simulation and Computation},
  18(3):1059--1076.

\bibitem[Hyttinen et~al., 2012]{llc}
Hyttinen, A., Eberhardt, F., and Hoyer, P.~O. (2012).
\newblock Learning linear cyclic causal models with latent variables.
\newblock {\em The Journal of Machine Learning Research}, 13(1):3387--3439.

\bibitem[Jang et~al., 2016]{jang2016categorical}
Jang, E., Gu, S., and Poole, B. (2016).
\newblock Categorical reparameterization with {G}umbel-{S}oftmax.
\newblock {\em arXiv preprint arXiv:1611.01144}.

\bibitem[Khemakhem et~al., 2021]{khemakhem2021causal}
Khemakhem, I., Monti, R., Leech, R., and Hyvarinen, A. (2021).
\newblock Causal autoregressive flows.
\newblock In {\em International Conference on Artificial Intelligence and
  Statistics}, pages 3520--3528. PMLR.

\bibitem[Kingma and Ba, 2015]{KingmaB14}
Kingma, D.~P. and Ba, J. (2015).
\newblock Adam: A method for stochastic optimization.
\newblock In {\em ICLR (Poster)}.

\bibitem[Lee et~al., 2019]{lee2019scaling}
Lee, H.-C., Danieletto, M., Miotto, R., Cherng, S.~T., and Dudley, J.~T.
  (2019).
\newblock Scaling structural learning with {NO-BEARS} to infer causal
  transcriptome networks.
\newblock In {\em Pacific Symposium on Biocomputing 2020}, pages 391--402.
  World Scientific.

\bibitem[Lopez et~al., 2022]{dcdfg}
Lopez, R., H{\"u}tter, J.-C., Pritchard, J.~K., and Regev, A. (2022).
\newblock Large-scale differentiable causal discovery of factor graphs.
\newblock In {\em Advances in Neural Information Processing Systems}.

\bibitem[Lopez et~al., 2018]{lopez2018deep}
Lopez, R., Regier, J., Cole, M.~B., Jordan, M.~I., and Yosef, N. (2018).
\newblock Deep generative modeling for single-cell transcriptomics.
\newblock {\em Nature Methods}, 15(12):1053--1058.

\bibitem[Meek, 1997]{Meek1997GraphicalMS}
Meek, C. (1997).
\newblock {\em Graphical Models: Selecting causal and statistical models}.
\newblock PhD thesis, Carnegie Mellon University.

\bibitem[Miyato et~al., 2018]{specnormlip}
Miyato, T., Kataoka, T., Koyama, M., and Yoshida, Y. (2018).
\newblock Spectral normalization for generative adversarial networks.
\newblock {\em arXiv preprint arXiv:1802.05957}.

\bibitem[Mooij and Heskes, 2013]{cyclic_equil}
Mooij, J.~M. and Heskes, T. (2013).
\newblock Cyclic causal discovery from continuous equilibrium data.
\newblock In {\em Uncertainty in Artificial Intelligence}.

\bibitem[Ng et~al., 2020]{ng2020role}
Ng, I., Ghassami, A., and Zhang, K. (2020).
\newblock On the role of sparsity and {DAG} constraints for learning linear
  dags.
\newblock {\em Advances in Neural Information Processing Systems},
  33:17943--17954.

\bibitem[Nilsson et~al., 2022]{nilsson2022artificial}
Nilsson, A., Peters, J.~M., Meimetis, N., Bryson, B., and Lauffenburger, D.~A.
  (2022).
\newblock Artificial neural networks enable genome-scale simulations of
  intracellular signaling.
\newblock {\em Nature Communications}, 13(1):1--16.

\bibitem[Papamakarios et~al., 2021]{papamakarios_normalizing_2021}
Papamakarios, G., Nalisnick, E.~T., Rezende, D.~J., Mohamed, S., and
  Lakshminarayanan, B. (2021).
\newblock Normalizing {Flows} for {Probabilistic} {Modeling} and {Inference}.
\newblock {\em J. Mach. Learn. Res.}, 22(57):1--64.

\bibitem[Pearl, 2009]{sem2}
Pearl, J. (2009).
\newblock {\em Causality}.
\newblock Cambridge University Press, 2 edition.

\bibitem[Replogle et~al., 2022]{replogle2022mapping}
Replogle, J.~M., Saunders, R.~A., Pogson, A.~N., Hussmann, J.~A., Lenail, A.,
  Guna, A., Mascibroda, L., Wagner, E.~J., Adelman, K., Lithwick-Yanai, G.,
  et~al. (2022).
\newblock Mapping information-rich genotype-phenotype landscapes with
  genome-scale perturb-seq.
\newblock {\em Cell}.

\bibitem[Richardson, 1996]{richardson1996discovery}
Richardson, T. (1996).
\newblock A discovery algorithm for directed cyclic graphs.
\newblock In {\em Proceedings of the Twelfth international conference on
  Uncertainty in artificial intelligence}, pages 454--461.

\bibitem[Rudin, 1953]{banachfp}
Rudin, W. (1953).
\newblock {\em Principles of {M}athematical {A}nalysis}.
\newblock McGraw-Hill Book Company, Inc., New York-Toronto-London.

\bibitem[Sachs et~al., 2005]{sachs_causal_2005}
Sachs, K., Perez, O., Pe'er, D., Lauffenburger, D.~A., and Nolan, G.~P. (2005).
\newblock Causal protein-signaling networks derived from multiparameter
  single-cell data.
\newblock {\em Science}, 308(5721):523--529.

\bibitem[Segal et~al., 2005]{segal_learning_2005}
Segal, E., Pe'er, D., Regev, A., Koller, D., Friedman, N., and Jaakkola, T.
  (2005).
\newblock Learning module networks.
\newblock {\em Journal of Machine Learning Research}, 6(4).

\bibitem[Solus et~al., 2017]{solus2017consistency}
Solus, L., Wang, Y., Matejovicova, L., and Uhler, C. (2017).
\newblock Consistency guarantees for permutation-based causal inference
  algorithms.
\newblock {\em arXiv preprint arXiv:1702.03530}.

\bibitem[Spirtes et~al., 2000]{sprites}
Spirtes, P., Glymour, C.~N., Scheines, R., and Heckerman, D. (2000).
\newblock {\em Causation, prediction, and search}.
\newblock MIT press.

\bibitem[Triantafillou and Tsamardinos, 2015]{triantafillou2015constraint}
Triantafillou, S. and Tsamardinos, I. (2015).
\newblock Constraint-based causal discovery from multiple interventions over
  overlapping variable sets.
\newblock {\em The Journal of Machine Learning Research}, 16(1):2147--2205.

\bibitem[Tsamardinos et~al., 2006]{tsamardinos2006max}
Tsamardinos, I., Brown, L.~E., and Aliferis, C.~F. (2006).
\newblock The max-min hill-climbing bayesian network structure learning
  algorithm.
\newblock {\em Machine learning}, 65(1):31--78.

\bibitem[Wang et~al., 2017]{wang2017permutation}
Wang, Y., Solus, L., Yang, K., and Uhler, C. (2017).
\newblock Permutation-based causal inference algorithms with interventions.
\newblock {\em Advances in Neural Information Processing Systems}, 30.

\bibitem[Yu et~al., 2019]{yu2019dag}
Yu, Y., Chen, J., Gao, T., and Yu, M. (2019).
\newblock {DAG-GNN}: {DAG} structure learning with graph neural networks.
\newblock In {\em International Conference on Machine Learning}, pages
  7154--7163. PMLR.

\bibitem[Yuan et~al., 2021]{yuan2021cellbox}
Yuan, B., Shen, C., Luna, A., Korkut, A., Marks, D.~S., Ingraham, J., and
  Sander, C. (2021).
\newblock Cellbox: interpretable machine learning for perturbation biology with
  application to the design of cancer combination therapy.
\newblock {\em Cell systems}, 12(2):128--140.

\bibitem[Zhang et~al., 2013]{zhang_integrated_2013}
Zhang, B., Gaiteri, C., Bodea, L.-G., Wang, Z., McElwee, J., Podtelezhnikov,
  A.~A., Zhang, C., Xie, T., Tran, L., and Dobrin, R. (2013).
\newblock Integrated systems approach identifies genetic nodes and networks in
  late-onset {Alzheimer}’s disease.
\newblock {\em Cell}, 153(3):707--720.

\bibitem[Zheng et~al., 2018]{notears}
Zheng, X., Aragam, B., Ravikumar, P.~K., and Xing, E.~P. (2018).
\newblock {DAG}s with {NO} {TEARS}: Continuous optimization for structure
  learning.
\newblock In Bengio, S., Wallach, H., Larochelle, H., Grauman, K.,
  Cesa-Bianchi, N., and Garnett, R., editors, {\em Advances in Neural
  Information Processing Systems}, volume~31.

\bibitem[Zheng et~al., 2020]{zheng20learning}
Zheng, X., Dan, C., Aragam, B., Ravikumar, P., and Xing, E. (2020).
\newblock Learning sparse nonparametric {DAG}s.
\newblock In Chiappa, S. and Calandra, R., editors, {\em Proceedings of the
  Twenty Third International Conference on Artificial Intelligence and
  Statistics}, volume 108, pages 3414--3425.

\end{thebibliography}

\newpage
\appendix
\onecolumn

\section*{Appendix}

The appendix is organized as follows: in Appendix \ref{app:proof} we present the proof of Proposition \ref{prop:ncon-con} followed implementation details and further details regarding the real-world experiment in appendix \ref{app:imp-det} and appendix \ref{app:perturb-cite-seq} respectively. 

\section{PROOFS}
\label{app:proof}

\subsection{Proof of Proposition 1}

In this section, we present a detailed proof of Proposition 1.

\begin{prop}[Non-contractive to Contractive]
Let $(G, f)$ represent a causal DAG and its causal mechanism. If $f: \R^d\to\R^d$, is an L-Lipschitz function, then there exists $\tilde{f}$ of the form $\tilde{f} = \Lambda \circ f \circ \Lambda^{-1}$, where $\Lambda$ denotes multiplication with a diagonal matrix with positive diagonal entries (also denoted by $\Lambda$) such that $\tilde{f}$ is contractive.    
\end{prop}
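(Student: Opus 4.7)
The plan is to exploit the DAG structure by working in a topological ordering of the nodes and choosing $\Lambda$ with geometrically decreasing diagonal entries. After relabeling the nodes so that $\mathrm{pa}(i)\subseteq\{1,\ldots,i-1\}$, the component $f_i$ depends only on coordinates $x_1,\ldots,x_{i-1}$. I set $\Lambda = \mathrm{diag}(\rho,\rho^2,\ldots,\rho^d)$ for a constant $\rho \in (0,1)$ to be pinned down at the end. The guiding intuition is that in topological order every ``dependence arrow'' points from an index $j$ to a strictly larger index $i$, and the conjugation shrinks such a contribution by the factor $\lambda_i/\lambda_j = \rho^{i-j}\le \rho$, so all cross-coordinate terms can be made uniformly small.

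Next I would carry out a Lipschitz bound component-wise rather than through the Jacobian, which avoids having to assume differentiability of $f$. Since each coordinate $f_i$ inherits the $L$-Lipschitz property of $f$ (it is a coordinate projection of $f$) and depends only on $x_1,\ldots,x_{i-1}$, I would write, for any $y,z\in\R^d$,
\begin{equation*}
|\tilde f_i(y) - \tilde f_i(z)|^2 = \lambda_i^2\,\bigl|f_i(\Lambda^{-1}y) - f_i(\Lambda^{-1}z)\bigr|^2 \le \lambda_i^2\,L^2\sum_{j=1}^{i-1}\frac{(y_j-z_j)^2}{\lambda_j^2},
\end{equation*}
then sum over $i$ and swap the order of summation to obtain
\begin{equation*}
\|\tilde f(y)-\tilde f(z)\|^2 \le L^2\sum_{j=1}^{d}(y_j-z_j)^2\sum_{i>j}\bigl(\lambda_i/\lambda_j\bigr)^2 \le L^2\cdot\frac{\rho^2}{1-\rho^2}\,\|y-z\|^2,
\end{equation*}
where the last inequality uses the geometric series $\sum_{k\ge1}\rho^{2k}=\rho^2/(1-\rho^2)$.

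Finally I would choose $\rho$ small enough that $L^2\rho^2/(1-\rho^2)<1$, for instance any $\rho<1/\sqrt{1+L^2}$, giving a Lipschitz constant strictly less than $1$ for $\tilde f$ and hence contractivity. I do not anticipate a serious obstacle; the main points to be careful about are (i) performing the argument directly with the Lipschitz constant instead of via an operator-norm bound on the Jacobian, so no extra smoothness hypothesis is needed, and (ii) making sure that the geometric scaling $\lambda_i=\rho^i$ is aligned with the topological order (which is why the initial relabeling is essential — it is the step that turns acyclicity into the strict inequality $i>j$ used in the geometric sum).
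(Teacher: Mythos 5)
Your proof is correct, and it takes a genuinely different route from the paper's. The paper assumes differentiability (invoking Rademacher's theorem for the general Lipschitz case), bounds the individual entries of the Jacobian via $|(J_f)_{i,j}| \le \sqrt{d}\,L$, uses a recursively defined scaling $\Lambda_{i,i} = \tfrac{d^2L}{c}\max_{j>i}\Lambda_{j,j}$ to force every subdiagonal entry of $J_{\tilde f}$ below $c/d^{3/2}$, and then recovers an operator-norm bound through $\ell_1$--$\ell_2$ norm equivalences. You instead work directly with the Lipschitz property coordinate-wise: since each $f_i$ is $L$-Lipschitz and, after topological sorting, a function of $x_1,\dots,x_{i-1}$ only, the bound $|f_i(a)-f_i(b)| \le L\,\|a_{1:i-1}-b_{1:i-1}\|$ holds (replace $a$ by the vector agreeing with $a$ on the first $i-1$ coordinates and with $b$ elsewhere), and the geometric scaling $\lambda_i=\rho^i$ lets you sum the cross terms in closed form, yielding the explicit Lipschitz constant $L\rho/\sqrt{1-\rho^2}$ and the clean threshold $\rho < 1/\sqrt{1+L^2}$. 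Your argument buys three things: it needs no smoothness hypothesis at all, it stays in the Euclidean norm throughout so no dimension factors appear, and it produces an explicit closed-form $\Lambda$ rather than a recursive one. What the paper's entry-wise Jacobian argument buys in exchange is mainly that it mirrors the lower-triangularity structure directly in the matrix $J_{\tilde f}$, which connects more visibly to the linear intuition, but it is strictly more cumbersome. Both proofs hinge on the same essential idea --- acyclicity makes every dependence point strictly forward in the topological order, and a rapidly decaying (or growing) diagonal conjugation damps each such arrow --- so your proposal is a valid and arguably tighter substitute.
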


\begin{proof}
    For ease of notation, we assume that $f$ is everywhere differentiable and  denote its Jacobian by $J_f$. The general case can be proved similarly since Lipschitz functions are almost everywhere differentiable by Rademacher's theorem \citep{ambrosio2000functions}.
    
     Without loss of generality, we assume that the graph $G$ is topologically sorted along the indices $i=1,\dots,d$. If not, we can rearrange the dimensions of $f$ accordingly. With this,  the Jacobian $J_{f}$ is a strictly lower triangular matrix. For a desired contractivity constant $0 < c < 1$, we recursively define the entries of the diagonal matrix $\Lambda \in \mathbb{R}^{d \times d}$ as follows:
     \begin{equation}
         \begin{aligned}
             \Lambda_{d,d} &= 1,\\
             \Lambda_{i,i} &= \frac{d^2 L}{c} \max_{j > i} \Lambda_{j,j}, \quad \text{for } i < d.
         \end{aligned}
         \label{lambda-def}
     \end{equation}
     Defining $\tilde{f}=\Lambda \circ f \circ \Lambda^{-1}$, we have that $J_{\tilde{f}} = \Lambda J_f \Lambda^{-1}$. For any $x \in \R^d$, the $(i,j)$-th entry of $J_{\tilde{f}}$ is therefore given by
     \begin{equation}
    (J_{\tilde{f}}(x))_{i,j} = \frac{\Lambda_{i,i}}{\Lambda_{j,j}} (J_f(\Lambda^{-1}x))_{i,j}.
    \end{equation}

 Let $y = \Lambda^{-1}x$. Since $\| z \|_1 \leq \sqrt{d} \| z \|_2 $ by the Cauchy-Schwarz inequality and $\| z \|_2 \leq \| z \|_1$ for all $z \in \mathbb{R}^d$, we obtain
    \begin{equation}
        |(J_f(y))_{i,j}|
        \leq \sup_{z : \|z\|_1 \leq 1}
        \|J_{f}(y)[z]\|_1
        \leq \sup_{z : \|z\|_2 \leq 1}
        \sqrt{d}\|J_{f}(y)[z]\|_2
        \leq \sqrt{d}\norm{J_f(y)}_{\mathrm{op}} \leq \sqrt{d}L.
        \label{eq:entry-to-lipschitz}
    \end{equation}
    In turn, the entries of $J_{\tilde{f}}$ can be bounded as follows:
    for $i > j$, by combining \eqref{eq:entry-to-lipschitz} with the definition of $\Lambda$ \eqref{lambda-def}, we obtain
    \begin{align}
        |(J_{\tilde{f}}(x))_{i,j}| = 
        \frac{\Lambda_{i,i}}{\Lambda_{j,j}}|(J_f(y))_{i,j}|
        \leq \frac{1}{\Lambda_{j,j}} \sqrt{d} L \max_{k \geq i } \Lambda_{k,k}
        \leq \frac{c}{d^{3/2}}.
    \label{eq:up-bound}
    \end{align}
    For $i \leq j$, since $J_f$ and therefore $J_{\tilde{f}}$ are strictly lower triangular by definition,  $(J_{\tilde{f}}(x))_{i,j} = 0$.
    
    Finally, applying similar reasoning as in \eqref{eq:entry-to-lipschitz} and the bound in \eqref{eq:up-bound}, we obtain a bound on the operator norm of $J_{\tilde{f}}$,
    \begin{align*}
        \norm{J_{\tilde{f}}(x)}_{\mathrm{op}}
        = \sup_{z : \|z\|_2 \leq 1}
        \|J_{\tilde{f}}(x)[z]\|_2
        \leq \sup_{z : \|z\|_1 \leq \sqrt{d}}
        \|J_{\tilde{f}}(x)[z]\|_1
        = \sqrt{d}\max_j \sum_{i=1}^d |(J_{\tilde{f}}(x))_{i,j}|
        \leq d^{3/2} \frac{c}{d^{3/2}} = c < 1.
    \end{align*}
    This concludes the proof, showing that $\tilde{f}$ is contractive.
\end{proof}

\section{IMPLEMENTATION DETAILS}
\label{app:imp-det}

In this section, we present the implementation details of NODAGS-Flow, the baselines as well as the setup of the experiments. 

\subsection{NODAGS-Flow}

We consider the parametric family of neural networks (NN), denoted as $g_{\theta}$, to model the causal function $f$. As detailed in section 4.2 of the main paper, the dependency structure (parent-child relations) is encoded by introducing a dependency mask $\bm{M}^{\prime} \in \{0,1\}^{d\times d}$ in the model. $\bm{M}^{\prime}$ is then used to mask the inputs for each node as shown in equation (10) of the main paper. For the neural network architecture, we fix each hidden layer to have the same number of neurons ($=d$) and vary the number of hidden layers in the model. If the data is nonlinear we add a \texttt{ReLU} activation function to each layer of the neural network, allowing NODAGS-Flow to learn nonlinear parent-child relations. 

In order to maintain the contractivity of the neural network, the weights of each layer are rescaled by its spectral norm, similar to \citet{iresnet} and \citet{specnormlip}. This is done every time the weights are updated, that is, after every backward pass. In practice, we choose the Lipschitz constant of the neural network to be $0.9$. For computing the log Jacobian determinant, we sample the number of terms in the power series from a Poisson distribution with parameter $\sigma$ initialized to $2$. Additionally, $\sigma$ is treated as a parameter to be learned during training. During the training stage, we use the Neumann gradient series formulation of the log Jacobian determinant estimator in \cite{iresnet} as this provides a more efficient way for backpropagation over the entries of the Jacobian matrix. Whereas in the validation stage we use the standard estimator for the log Jacobian determinant. The number of hidden layers, regularization parameter $\lambda$, and the number of terms used for computing the spectral norm of the weights ($n_L$) are treated as parameters to be tuned. 

\subsection{Baseline Methods}

We now provide the implementation details of the baselines used in our experiments. The LLC algorithm proposed by \citet{llc} was reimplemented and a sparse regularization term was added in order to make LLC solve the same objective as NODAGS-Flow and the other baselines. In accordance with the method proposed by \citet{llc} we assume that the intervened nodes are independent and sampled from the standard normal distribution. Of the other baselines used only DCDI \citep{dcdi} supports interventional data out of the box. Hence NOTEARS \citep{notears} and GOLEM \citep{ng2020role} were reimplemented along the lines of \citet{dcdfg}. For NOTEARS, DCDI, and GOLEM, we threshold the adjacency matrices (probability of an edge for DCDI) with a threshold $t$ obtained by performing a binary search with $T=20$ evaluations of an acyclicity test to find the largest possible DAG from the estimated weights matrix. 

\begin{table}[t]
    \centering
    \caption{Hyperparameter spaces for all the models.}
    \vspace{0.2cm}
    \begin{tabular}{lc}
    \hline
         &  \textbf{Hyperparameter space}\\
    \hline
     \multirow{3}{*}{\textbf{NODAGS-Flow}}    &  $\log_{10}(\lambda) \in [-4, 2]$ \\
            & \# hidden units $\in \{0, 1, 2, 3\}$\\
            & $n_L \in \{5, 10, 15\}$\\
    \hline
    \textbf{LLC} & $\log_{10}(\lambda) \in [-4, 2]$ \\
    \hline
    \multirow{2}{*}{\textbf{GOLEM}} & $\log_{10}(\lambda) \in [-4, 2]$ \\
        & $\lambda_{DAG} \in [-3, 3]$\\
    \hline
    \textbf{NOTEARS} & $\log_{10}(\lambda) \in [-4, 2]$ \\
    \hline
        \textbf{DCDI} & $\log_{10}(\lambda) \in [-4, 2]$ \\
    \hline
    \end{tabular}
    \label{tab:hyper-param}
\end{table}

\subsection{Hyperparameter Tuning}
\label{sec:hyper-tune}

For all methods, an exhaustive hyperparameter search was performed using the \texttt{Ax} library \citep{bakshy2018ae} that can perform joint Bayesian and bandit optimization over the set of hyperparameters. The list of chosen hyperparameters for each model is summarized in Table \ref{tab:hyper-param}. \texttt{Ax} samples from the range of values provided for each parameter, the models are then trained using the sampled parameters and are then evaluated on the validation set. For the synthetic experiments, the training set consists of single-node interventions over all the nodes in the graph and the validation set consists of interventions over 2-3 randomly chosen nodes. For the Perturb-CITE-seq data set, we randomly split 10\% of the interventions to be a part of the validation set and the rest with the training set. We perform separate hyperparameter tuning for the synthetic and the real-world experiments and use the optimal parameter values provided by \texttt{Ax} for the respective experiments. Additionally, we fix the learning rate to $10^{-2}$ and use Adam optimizer \citep{KingmaB14} for maximizing the log-likelihood. 

\subsection{Compute Time analysis}

\begin{figure}
    \centering
    \includegraphics[width=0.8\linewidth]{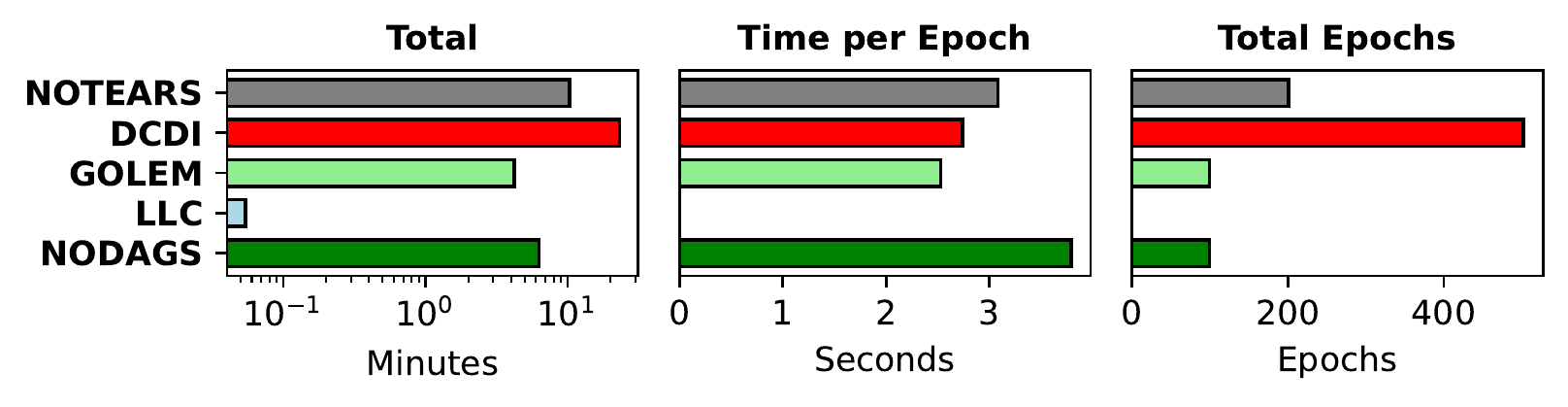}
    \caption{Comparison of the runtime of the chosen }
    \label{fig:runtime}
\end{figure}

In Figure \ref{fig:runtime}, we compare the runtime between NODAGS-Flow and the chosen baselines. It is important to note that, LLC (unlike NODAGS-Flow, DCDI, GOLEM, and NOTEARS) is not deep-learning-based method and hence doesn't require any training via stochastic gradient methods. Instead, it solves simple linear regression problems from the estimated covariance matrices for each intervention. This makes LLC considerably faster than the other algorithms as seen in Figure \ref{fig:runtime}. Additionally, due to the lack of training LLC is excluded from Time per Epoch and Total Epochs plots. Aside from LLC, we can see that the other methods are comparable in terms of total runtime and runtime per epoch. In particular, the log-det approximation necessary in every epoch of NODAGS-Flow renders the computational cost per epoch for NODAGS-Flow the highest. However, NODAGS-Flow is overall faster than the only other method relying on a nonlinear mechanism, DCDI, by a factor of more than 3.5 due to not relying on solving a constrained optimization problem via the Augmented Lagrangian Method.


\subsection{Code Statement}

We implemented our models using the PyTorch library in Python and plan to make it publicly available on GitHub upon publication. 

\section{REAL-WORLD EXPERIMENT}
\label{app:perturb-cite-seq}

The data set was downloaded from the Single Cell Portal of the Broad Institute (accession code SCP1064). We removed cells containing less than 500 expressed genes and genes that were expressed in less than 500 cells. Due to computational constraints, we chose a subset of 61 genes (Table \ref{tab:genes}) from the total set of genes in the genome, ensuring that all the chosen genes were perturbed. The three different conditions (co-culture, IFN-$\gamma$, and control) were partitioned into distinct data sets. The models were trained and evaluated on each of the three data sets. Figures \ref{fig:cluster-map-ifn} and \ref{fig:cluster-map-control} show the cluster map of the learnt adjacency matrices for the IFN-$\gamma$ and control datasets respectively. 

\begin{table}[!ht]
    \centering
    \caption{The list of chosen genes from Perturb-CITE-seq dataset \citep{frangieh2021multimodal}.}
    \vspace{0.2cm}
    \begin{tabular}{|llllllllll|}
    \hline
        ACSL3 & ACTA2 & B2M & CCND1 & CD274 & CD58 & CD59 & CDK4 & CDK6  & ~ \\
        CDKN1A & CKS1B & CST3 & CTPS1 & DNMT1 & EIF3K & EVA1A & FKBP4 & FOS  & ~ \\ 
        GSEC & GSN & HASPIN & HLA-A & HLA-B & HLA-C & HLA-E & IFNGR1 & IFNGR2  & ~ \\ 
        ILF2 & IRF3 & JAK1 & JAK2 & LAMP2 & LGALS3 & MRPL47 & MYC & P2RX4  & ~ \\ 
        PABPC1 & PAICS & PET100 & PTMA & PUF60 & RNASEH2A & RRS1 & SAT1 & SEC11C  & ~ \\ 
        SINHCAF & SMAD4 & SOX4 & SP100 & SSR2 & STAT1 & STOM & TGFB1 & TIMP2  & ~ \\ 
        TM4SF1 & TMED10 & TMEM173 & TOP1MT & TPRKB & TXNDC17 & VDAC2 & ~ &   & ~ \\ \hline
    \end{tabular}
    \label{tab:genes}
\end{table}

\subsection{NODAGS-Flow vs. LLC}
Figure \ref{fig:pert-cite-seq-llc} shows the performance comparison between NODAGS-Flow and LLC. It can be seen that NODAGS-Flow is able to outperform LLC with respect to both the evaluation metrics and across all three conditions. This shows that learning nonlinear relations does indeed provide an advantage, but we also attribute LLC's poor performance to the mismatch in LLC's treatment of the intervened nodes to its actual behavior. That is, LLC considers the intervened nodes to be independent with zero mean and unit variance, which is not the case in the data set and hence the significantly worse performance compared to the other baselines. 

\begin{figure}[h]
    \centering
    \includegraphics[width=0.8\linewidth]{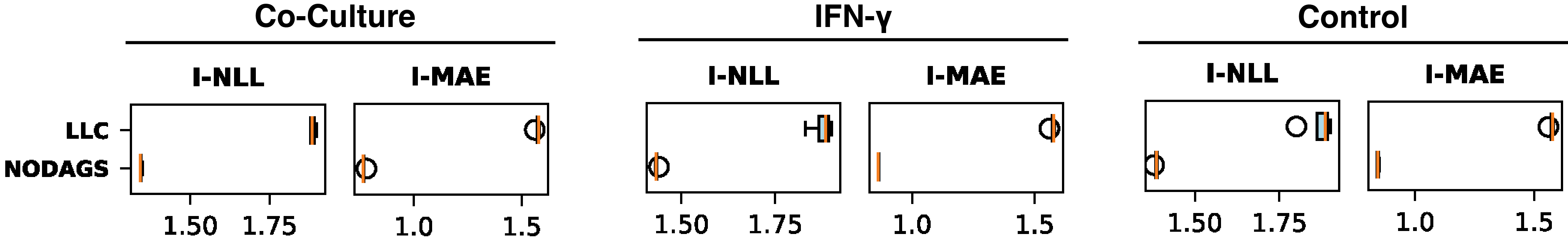}
    \caption{Performance comparison between NODAGS-Flow and LLC on Perturb-CITE-seq data set. }
    \label{fig:pert-cite-seq-llc}
\end{figure}

\begin{figure}[h]
     \centering
     \includegraphics[width=0.35\linewidth]{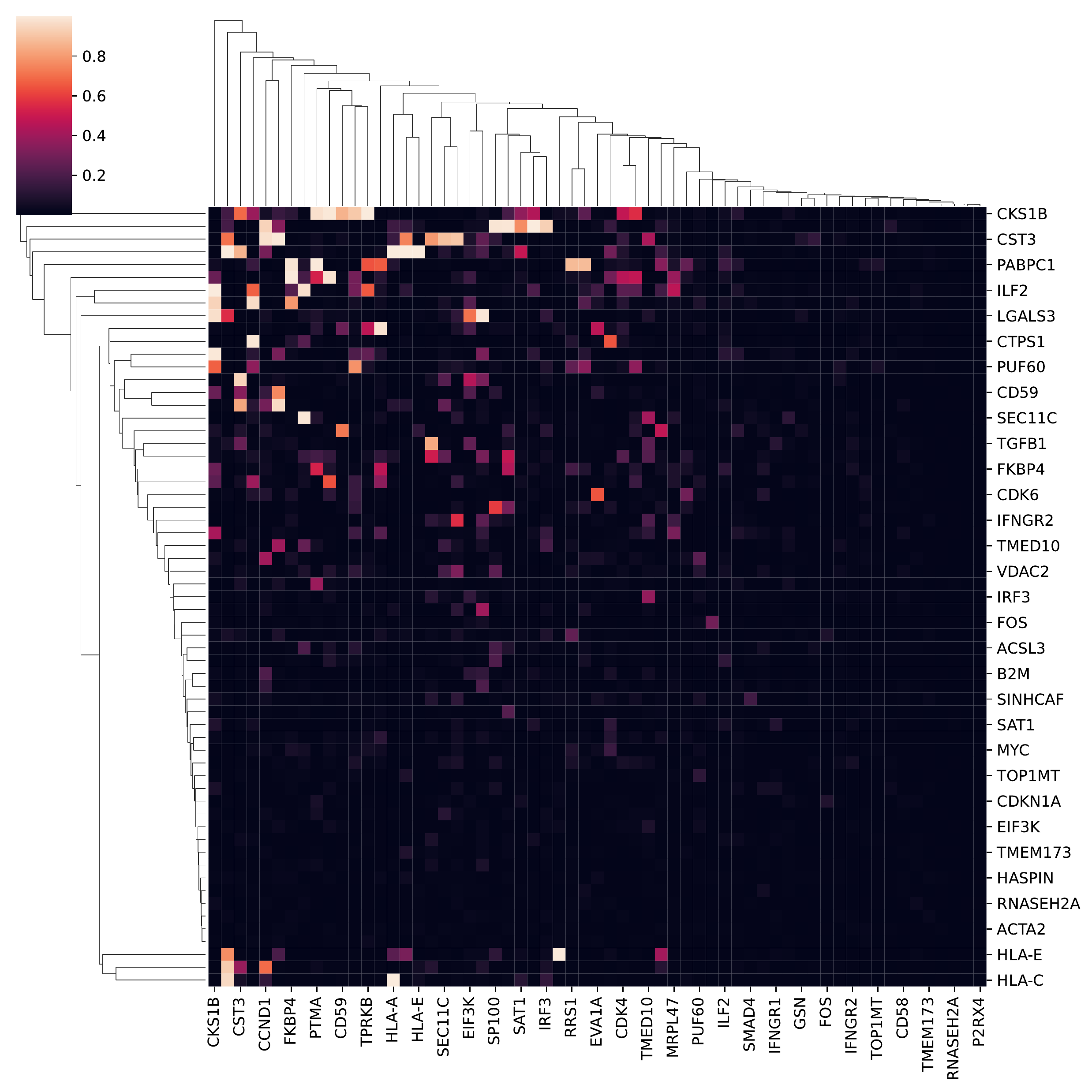}
\caption{Clustermap of the adjacency matrix learned by NODAGS-Flow on the IFN-$\gamma$ datasets. }
\label{fig:cluster-map-ifn}
\end{figure}

\begin{figure}[h]
     \centering
     \includegraphics[width=0.35\linewidth]{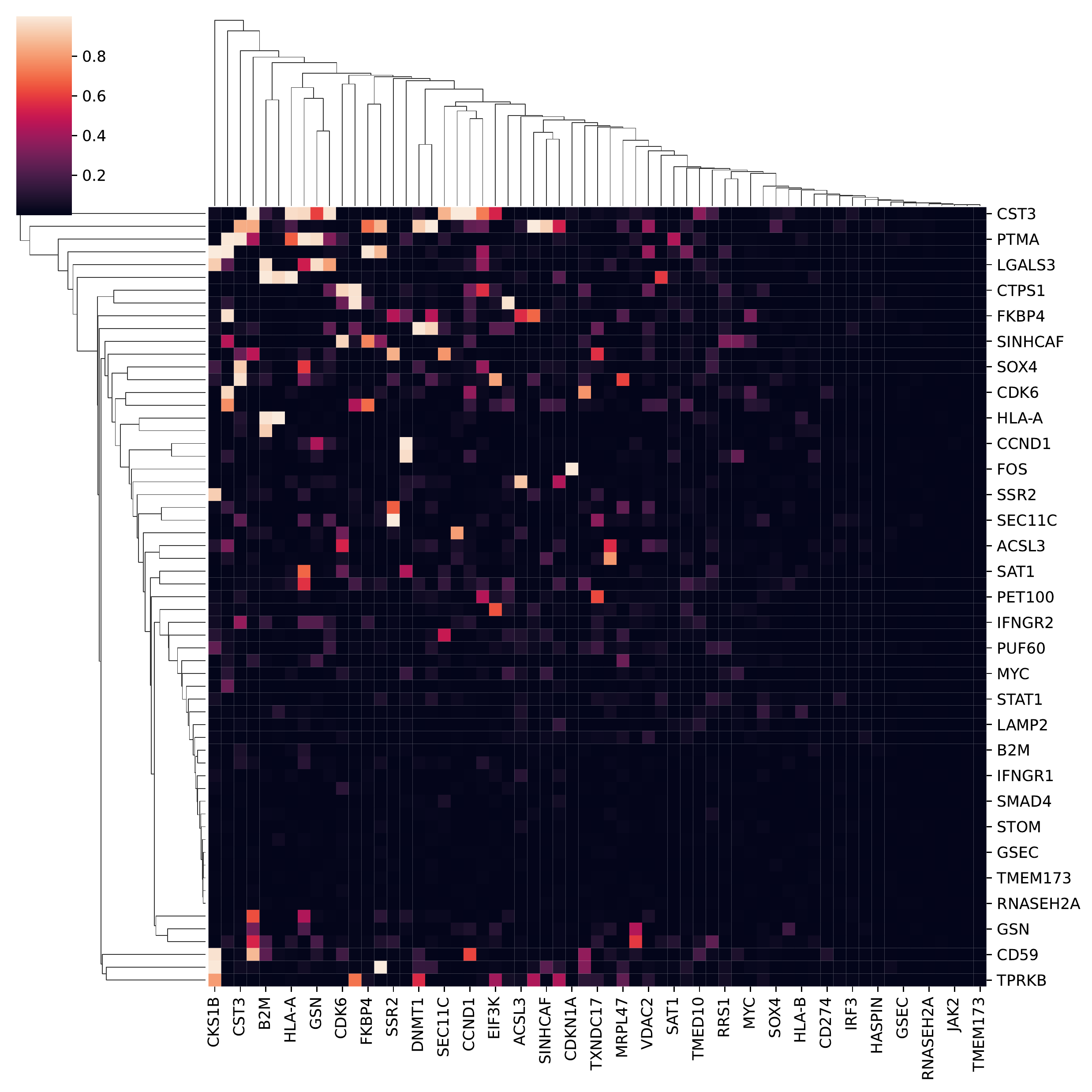}
\caption{Clustermap of the adjacency matrix learned by NODAGS-Flow on the control datasets}
\label{fig:cluster-map-control}
\end{figure}

\end{document}